\documentclass{article} 
\usepackage{PRIMEarxiv}

\usepackage{amssymb}            
\usepackage{mathtools}          
\usepackage{mathrsfs}           
\usepackage{graphicx}           
\usepackage{subcaption}         
\usepackage[space]{grffile}     
\usepackage{url}                
\usepackage{lipsum}             
\usepackage[acronym]{glossaries}
\usepackage{xcolor}
\usepackage{amsmath, amsthm, amssymb, dsfont}
\usepackage{float}
\usepackage{natbib}

\newacronym{acr:rl}{RL}{Reinforcement Learning}
\newacronym{acr:dqn}{DQN}{Deep Q-Network}
\newacronym{acr:ddqn}{DDQN}{Double Deep Q-Network}
\newacronym{acr:tql}{TQL}{Tabular Q-learning}
\newacronym{acr:sme}{SME}{Synthetic Monitoring Environment}
\newacronym{acr:ood}{OOD}{Out-of-Distribution}
\newacronym{acr:wd}{WD}{Within-Distribution}
\newacronym{acr:dun}{DUN}{Deep Uniform Network}
\newacronym{acr:pit}{PIT}{Probability Integral Transform}
\newacronym{acr:clt}{CLT}{Central Limit Theorem}
\newacronym{acr:cdf}{CDF}{Cumulative Distribution Function}
\newacronym{acr:ppo}{PPO}{Proximal Policy Optimization}
\newacronym{acr:sac}{SAC}{Soft Actor-Critic}
\newacronym{acr:td3}{TD3}{Twin Delayed Deep Deterministic Policy Gradient}
\newacronym{acr:mae}{MAE}{Mean Absolute Error}
\newacronym{acr:bc}{BC}{Behavioral Cloning}
\newacronym{acr:iql}{IQL}{Implicit Q-Learning}

\newtheorem{theorem}{Theorem}
\newtheorem{proposition}{Proposition}
  
\title{Synthetic Monitoring Environments for Reinforcement Learning
}

\author{%
  Leonard S.~Pleiss\\
  Technical University Munich\\
  Munich, 80331 \\
  \texttt{leonard.pleiss@tum.de} \\
  \And
  Carolin Schmidt \\
  Technical University Munich\\
  Munich, 80331 \\
  \texttt{carolin.schmidt@tum.de} \\
  \And
  Maximilian Schiffer \\
  Technical University Munich\\
  Munich, 80331 \\
  \texttt{schiffer@tum.de} \\
}

\begin{document}

\maketitle

\begin{abstract}
Reinforcement Learning (RL) lacks benchmarks that enable precise, white-box diagnostics of agent behavior. Current environments often entangle complexity factors and lack ground-truth optimality metrics, making it difficult to isolate why algorithms fail. We introduce Synthetic Monitoring Environments (SMEs), an infinite suite of continuous control tasks. SMEs provide fully configurable task characteristics and known optimal policies. As such, SMEs allow for the exact calculation of instantaneous regret. Their rigorous geometric state space bounds allow for systematic within-distribution (WD) and out-of-distribution (OOD) evaluation. We demonstrate the framework’s benefit through multidimensional ablations of PPO, TD3, and SAC, revealing how specific environmental properties--such as action or state space size, reward sparsity and complexity of the optimal policy--impact WD and OOD performance. We thereby show that SMEs offer a standardized, transparent testbed for transitioning RL evaluation from empirical benchmarking toward rigorous scientific analysis.
\end{abstract}

\section{Introduction}

\gls{acr:rl} has driven remarkable breakthroughs over the last decade, demonstrating super-human performance in domains ranging from complex strategy games and robotic manipulation to the fine-tuning of Large Language Models \citep{mnih_playing_2013, mnih_human-level_2015, silver2017masteringchessshogiselfplay, schrittwieser_mastering_2019, ouyang2022traininglanguagemodelsfollow}. These successes have been fueled by the availability of standardized benchmarks, e.g., the Arcade Learning Environment \citep{ArcadeLearningEnvironment} and MuJoCo \citep{Mujoco}, which have served as the proving grounds for novel algorithms. However, while these environments are effective for comparing relative performance, they remain opaque regarding the fundamental mechanics of the learning process. Further, as many empirical results are based on a small set of environments, it restricts the scope of empirical evaluations to a narrow set of environment characteristics and dynamics, possibly compromising the generalizability of demonstrated results \citep{whiteson_icml09}. 

As the field matures from demonstrating capability to requiring a scientific understanding of reliability and efficiency, the rigidity of current benchmarks has become a fundamental obstacle. Many existing environments do not provide sufficient infrastructure for the precise monitoring of learning dynamics, complicating the diagnosis of failure modes. Specifically, we identify three critical shortcomings in contemporary \gls{acr:rl} benchmarks:

\begin{enumerate}
\item \emph{Absence of ground-truth optimality measures:} In most popular benchmarks, the true optimal policy $\pi^{\star}$ is mathematically intractable. Consequently, a policy's \emph{degree of suboptimality} is unknown. This forces researchers to rely on relative performance metrics against human baselines or other algorithms, rather than measuring the absolute regret. This opacity makes it difficult to precisely monitor performance. It obscures whether an agent has achieved a globally optimal solution or is merely stalled at a local optimum of unknown quality.

\item \emph{Inability to quantify robustness and generalization:} While there is a growing interest in \gls{acr:ood} generalization \citep{nasvytis2024rethinkingoutofdistributiondetectionreinforcement}, current benchmarks frequently lack mechanisms to provide a systematic testbed. Robustness tests often rely on binary or qualitative measures, missing an exact continuous metric that defines the proximity of an \gls{acr:ood} state to the training distribution. Without precise, quantifiable measures of regret and distributional distance, it is difficult to rigorously assess an agent's robustness to varying degrees of distance to the training distribution, i.e., how well the agent performs as the scenarios it encounters become progressively more unfamiliar compared to the exact data it saw during its learning phase.

\item \emph{Entangled complexity and lack of configurability:} Key characteristics of an environment--such as the cardinality of the state and action spaces, reward sparsity, and environment complexity--are typically fixed or entangled. Increasing the difficulty of a standard benchmark, if at all supported, often inadvertently alters multiple complexity axes simultaneously. This lack of independent configurability prevents orthogonal ablation studies, making it difficult to isolate exactly which environmental property is causing an algorithm to fail.
\end{enumerate}

In summary, the opacity and rigidity of current benchmarks hinder the precise monitoring of algorithmic behavior. To bridge this gap, we propose \emph{\glspl{acr:sme}}. \glspl{acr:sme} are the result of a framework capable of generating unlimited and diverse continuous control tasks. By offering full control over environmental parameters, access to the optimal policy and a well-defined state space, \glspl{acr:sme} allow for the precise monitoring of learning dynamics under diverse task configurations, as well as rigorous \gls{acr:wd} and \gls{acr:ood} evaluation.

\paragraph{Related work}

 Several prior studies have identified the shortcomings of prevalent benchmarks and introduced alternative evaluation suites. Most comparable to our approach are propositions by \citet{osband2020behavioursuitereinforcementlearning}, \citet{derail} and \citet{chevalierboisvert2023minigridminiworldmodular}.

\cite{osband2020behavioursuitereinforcementlearning} proposed \textsc{bsuite}, a large set of highly specific parametrized experiments, aimed to isolate core dimensions relevant in \gls{acr:rl}, such as exploration, noise or scale. Yet, environments are limited to discrete actions, and while the set of experiments is vast, it does not allow for a free modulation of key environmental characteristics and does not allow for systematic and comprehensive within-distribution and out-of-distribution evaluation. Similarly, \citet{derail} proposed \textsc{Derail}, another benchmark containing various experiments aimed at isolating important environmental features to allow for the identification of failure modes. Importantly, different environmental features are compared within different environments. Further, the optimal policy remains opaque. \citet{chevalierboisvert2023minigridminiworldmodular} proposed \textsc{MiniGrid} and \textsc{MiniWorld}, minimalistic 2D and 3D grid environments, which enable the modulation of some key environmental characteristics such as environment size, reward density, and exploration difficulty. Under limited problem complexity, it even allows for a calculation of optimal policies. However, the nature of the environment limits the ability to modulate the state and action space dimensionality, and also restricts the setting to discrete state and action spaces. Other benchmarks aim to isolate specific task characteristics. \textsc{ProcGen} \citep{cobbe2020leveragingproceduralgenerationbenchmark} leverages procedural generation of game-like settings to assess \gls{acr:ood} generalization. \textsc{RobustGymnasium} \citep{gu2025robustgymnasiumunifiedmodular} systematically applies diverse environmental disruptions to rigorously assess agent robustness.

\paragraph{Contribution}

Despite prior efforts, existing benchmarks lack the mechanisms required for a complete white-box analysis of the learning process. Such an analysis fundamentally relies on the ability to systematically isolate task characteristics and evaluate both \gls{acr:wd} and \gls{acr:ood} performance with absolute precision. Such rigorous evaluations fundamentally depend on three prerequisites: the free configurability of key environment characteristics, the precise quantification of distance to the training manifold for \gls{acr:ood} states, and the exact calculation of instantaneous regret for any state-action pair under arbitrary task complexity. \glspl{acr:sme} address these gaps within a unified framework, expanding the scope of available monitoring capabilities..

Specifically, our contribution is fourfold: First, we introduce \glspl{acr:sme}--modular, highly customizable environments designed for the precise evaluation of reinforcement learning agents. Second, we theoretically substantiate this framework by analyzing its core mechanisms, namely two classes of measure-preserving functions that serve as generalizable transition kernels and a priori optimal policies. Third, we leverage \glspl{acr:sme} to conduct rigorous ablation studies across isolated environmental characteristics, overcoming the confounding factors present in standard benchmarks. Finally, we establish a standardized methodology for both within-distribution (\gls{acr:wd}) and systematic \gls{acr:ood} evaluation, and present empirical insights into agent robustness.

\section{Synthetic monitoring environments}

Having established the limitations of current benchmarks in isolating specific learning challenges, we introduce \textit{Synthetic monitoring environments} (SMEs). This class of environments is designed to bridge the gap between toy problems like \textsc{GridWorld} and complex, high-dimensional tasks like \textsc{MuJoCo} or \textsc{Atari}, retaining the analytical tractability of the former while offering the flexibility to scale toward the complexity of the latter.

 SMEs operate over continuous state and action spaces on the unit hypercube, employing carefully designed neural networks to parameterize transition dynamics and optimal policies. This design offers five distinct advantages over standard \gls{acr:rl} environments:

\begin{enumerate}
    \item \emph{Complete configurability:} Unlike standard benchmarks where environment characteristics are fixed (e.g., the 8-dimensional action space in \textsc{Ant}), \glspl{acr:sme} allow for the independent modulation of state dimensionality ($N_s$), action dimensionality ($N_a$), reward distribution frequency ($k$), reward sparsity ($r_{min}$), survival difficulty $\mathcal{D}$ and optimal policy complexity $\mathcal{C}_{\pi^{\star}}$. This enables an isolated evaluation along single task complexity axes.
    
    \item \emph{Ground-truth optimality:} In most complex tasks, the true optimal policy $\pi^{\star}$ is unknown, and performance is measured by asymptotic return, which is often a noisy proxy. In \glspl{acr:sme}, the optimal policy is generated \textit{a priori} alongside the environment. This allows us to calculate the exact \textit{instantaneous regret} at every time step, even under extremely complex environment dynamics.
    
    \item \emph{Comprehensive within-distribution evaluation:} During training, the state space is rigorously bounded within the unit hypercube $\mathcal{S} \in [0, 1]^{N_s}$. In conjunction with access to the optimal policy, this allows for comprehensive and precise evaluation across the entirety of viable states observable during training.
    
    \item \emph{Native out-of-distribution support:} The clear geometric bounds of \glspl{acr:sme} during training facilitate precise \gls{acr:ood} testing. As the optimal policy provides actions for any conceivable state, we may evaluate the agent on states beyond the borders of the unit hypercube to test \gls{acr:ood} performance. This provides a standardized and precise testbed for measuring generalization, a capability currently missing in popular benchmarks where generating \gls{acr:ood} states is non-trivial.
    
    \item \emph{Functional diversity:} \glspl{acr:sme} constitute a highly diverse, dynamic suite of environments, representing an infinite variety of vastly different and arbitrarily complex tasks.
\end{enumerate}

Collectively, these characteristics enable a \emph{white-box analysis of black-box agents} across fully modular tasks. This flexibility allows us to rigorously stress-test agents, identifying not just \textit{that} they fail, but specifically \textit{where} and \textit{why} they fail.

\section{Methodology}

\glspl{acr:sme} are fundamentally characterized by two key elements, the transition kernel $\mathcal{T}$, and an optimal policy $\pi^{\star}$. The transition kernel defines the flow of states based on actions taken by the agent. The optimal policy poses a learning target. This allows us to calculate rewards as a function of the deviation between the agent's actions and the optimal actions provided by the optimal policy.

\subsection{Transition kernel} To govern the environment dynamics, the transition function of an \gls{acr:sme} must map a state-action pair of arbitrary dimensionality, $(s_t, a_t) \in [0, 1]^{N_s} \times [0, 1]^{N_a}$, to a subsequent state $s_{t+1} \in [0, 1]^{N_s}$, while satisfying three key criteria: (i) ensuring non-trivial contributions from all state and action channels; (ii) preserving the measure of the state distribution to prevent state-space collapse, which would otherwise degenerate the learning task; and (iii) maintaining determinism within a specific environment while providing diversity across instantiations.

To simultaneously satisfy these requirements, we formulate the transition dynamics as a composition of an affine transformation and a bounded non-linear activation. Specifically, the next state is defined as $s_{t+1} = \psi \left( s_t + a_t W + b \right)$, where $W \in \mathbb{R}^{N_a \times N_s}$ is a fixed weight matrix, $b \in \mathbb{R}^{N_s}$ is a fixed bias vector, and $\psi$ is a topological activation function.

\paragraph{Affine transformation} Unlike standard Gaussian initialization used in many neural architectures, we initialize the weight matrix $W$ to be row-stochastic. Specifically, the weights are drawn from a uniform distribution $\mathcal{U}(0, 1)$ and normalized such that the sum of weights associated with each action dimension equals unity, $\sum_{j=1}^{N_s} W_{ij} = 1, \quad \forall i \in \{1, \dots, N_a\}$. This initialization ensures that the magnitude of each action component $a_{t,i}$ is conserved as it is distributed across the state dimensions, and prevents the vanishing or exploding signal variances often found in deep neural networks. The bias $b$ is initialized uniformly, $b \sim \mathcal{U}(0, 1)^{N_s}$, to introduce a stochastic offset to the dynamics.

\paragraph{Triangle wave activations} To constrain the state space to the unit hypercube while preserving continuity, we employ a normalized triangle wave activation function $\psi: \mathbb{R} \to [0, 1]$, applied element-wise, $\psi(x) = \frac{1}{\pi} \arccos\left(\cos(2\pi x)\right)$.

Beyond simple boundary enforcement, this specific choice of activation function provides a rigorous statistical guarantee for the environment dynamics. Because the triangle wave acts as a continuous folding mechanism--analogous to the chaotic Tent Map over a periodic domain--, it avoids state-space collapse and achieves exact measure preservation. In the context of \gls{acr:rl}, this exact measure preservation is critical. It mathematically guarantees that the transition dynamics do not inherently compress the state space into narrow manifolds or point attractors, preserving the integrity of the learning task. We refer to the Supplementary Material, Section~\ref{sec:transition_discussion} for a formal analysis.

\subsection{Optimal policy}

The optimal policy of a \gls{acr:sme} must map a current state of arbitrary dimensionality $s_t \in \mathcal{S} = [0, 1]^{N_s}$ to an action of arbitrary dimensionality $a_t \in \mathbb{R}^{N_a}$. To serve as a rigorous learning target for the \gls{acr:rl} agent, it must satisfy three key criteria: (i) exhibit arbitrarily tunable complexity while remaining statistically well-behaved; (ii) maintain full coverage of the action space across the entire state space, thereby preventing uncontrolled collapse of output complexity; and (iii) guarantee determinism within a specific environment while ensuring diversity across instantiations.

To satisfy these requirements, we construct the optimal policy $\pi^{\star}: \mathcal{S} \to \mathcal{A}$ using a specialized architecture we denote as a \gls{acr:dun}. 
The network is composed of a sequence of Uniform Layers, each designed to map a uniform input distribution $u \sim \mathcal{U}(0,1)^{N_{in}}$ to a uniform output $v \sim \mathcal{U}(0,1)^{N_{out}}$, where $N_{in}$ and $N_{out}$ denote the number of input and output neurons.

\paragraph{The uniform layer} Each layer $l$ performs a transformation $f_l: [0, 1]^{N_{in}} \to [0, 1]^{N_{out}}$ consisting of an affine projection followed by a specific non-linearity. Given an input vector $x$ where each element $x_i \sim \mathcal{U}(0, 1)$, we first compute a pre-activation vector $z$, $z = Wx + b$.

 We employ a custom variance-correcting weight initialization, as standard methods \citep{glorot, he2015delvingdeeprectifierssurpassing} fail to account for the specific variance of our $\mathcal{U}(0,1)$ distribution. To maximize feature independence, we initialize $W$ as a semi-orthogonal matrix (strictly orthogonal if $N_{in}=N_{out}$). Given that the uniformly distributed inputs possess a variance of $\text{Var}(x_i) = 1/12$, we scale the weight matrix $W$ by a factor of $\sqrt{12}$. This enforces unit variance across all pre-activations, $\text{Var}(z_j) = 1$. Furthermore, given $\mathbb{E}[x_i] = 0.5$, we deterministically center the pre-activations at zero by setting the bias to $b = -0.5 W \mathds{1}$. By the \gls{acr:clt}, the linear combination $z$ converges to a standard normal distribution ($z \sim \mathcal{N}(0, I)$) as $N_{in}$ grows. Crucially, as demonstrated in the Supplementary Material, Section~\ref{sec:policy_discussion}, even for small values of $N_{in}$, this distributional stability is largely preserved, preventing policy collapse. Finally, to project $z$ back onto the unit hypercube, we apply the standard normal \gls{acr:cdf}, $\Phi(\cdot)$, as our activation function: $y = \Phi(z) = \frac{1}{2} \left[ 1 + \text{erf}\left( \frac{z}{\sqrt{2}} \right) \right]$. According to the \gls{acr:pit}, transforming a random variable $Z$ by its continuous \gls{acr:cdf} $F_Z$ yields $Y = F_Z(Z) \sim \mathcal{U}(0,1)$, thereby guaranteeing that the output $y$ is marginally distributed as $\mathcal{U}(0,1)^{N_{out}}$.

\paragraph{The full network} The full policy $\pi^{\star}$ consists of a stack of uniform layers. The depth of the network allows for complexity control. For the low-complexity regime ($L=1$), the mapping yields a near-linear transformation that largely preserves the input topology. Conversely, in the deep regime ($L \gg 1$), the composition of successive projection-activation layers induces a highly non-linear transformation. This leads to a marked decay in structural retention as the number of layers increases, as displayed in Figure~\ref{fig:policy-complexity}.

\begin{figure}[htbp]
    \centering
    \vspace{-0.3cm}
    \includegraphics[width=1\linewidth]{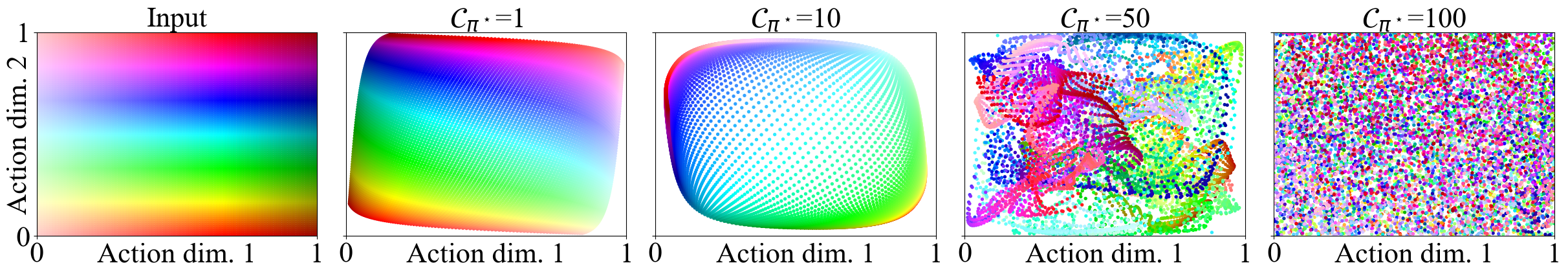}
    \caption{Topological deformation of the input space as a function of the number of stacked uniform layers, $\mathcal{C}_{\pi^{\star}}$.}
    \label{fig:policy-complexity}
\end{figure}

This optimal policy function has two key advantages: First, it maintains distributional stability. In contrast to standard architectures that frequently exhibit policy saturation at the action space boundaries, the \gls{acr:dun} facilitates a more uniform utilization of the action manifold, ensuring the optimal policy leverages the entire available control range and preventing a corruption of learning dynamics. Second, the explicit orthogonalization step minimizes redundant feature learning, ensuring that the target policy is information-dense. For a thorough formal analysis as well as extended empirical evaluations, we refer to the Supplementary Material, Section~\ref{sec:policy_discussion}.

\subsection{Reward Formulation and Episode Dynamics}

The reward mechanism in \glspl{acr:sme} is strictly based on a function of the behavioral deviation between the agent policy $\pi$ and the optimal policy $\pi^\star$, evaluated at each time step. To decouple the instantaneous performance evaluation from the frequency of the reward signal provided to the agent, we separate the internal step reward calculation from the environmental reward distribution.

\paragraph{Step reward calculation} At any given time step $t$, the agent emits an action $a_t \in \mathcal{A}$. The optimal action is concurrently determined by the optimal policy, $a^\star_t = \pi^\star(s_t)$. We first compute the baseline similarity $\tilde{r}_t$ via the complement of the \gls{acr:mae} between the two action vectors, $\tilde{r}_t = 1 - \frac{1}{N_a} \|a_t - a^\star_t\|_1$. This baseline similarity is rescaled and bounded. Because an average $\tilde{r}_t$ of $0.75$ can be trivially achieved by static, state-independent policies (such as predicting the center of the action space, $a_t = 0.5 \mathbf{1}_{N_a}$), we apply a linear transformation that clamps any value below this threshold to zero, $\hat{r}_t = \max\left(0, 4 (\tilde{r}_t - 0.75)\right)$. This ensures the agent is only rewarded for meaningful alignment with the optimal policy, de-noising the reward signal. Finally, to introduce controllable reward sparsity, we gate the instantaneous step reward $r_t$ by a minimum reward threshold $r_{\min}$, $r_t = \hat{r}_t \cdot \mathds{1}_{\{\hat{r}_t > r_{\min}\}}$.

\paragraph{Reward distribution and state augmentation} While we calculate $r_t$ internally at every step, it is not necessarily provided to the agent immediately. To facilitate the independent modulation of reward distribution frequency, we accumulate the step rewards into a running sum $r_{\text{cum}}$. The environment only dispenses a non-zero reward $R_t$ to the agent at intervals defined by a frequency parameter $k$, or upon episode truncation, $R_t = r_{\text{cum}} \cdot \mathds{1}_{\{(t \mod{k} = 0) \lor \text{truncated}\}}$. Following a non-zero payout, $r_{\text{cum}}$ is reset to zero. To preserve the Markov property under this delayed reward scheme, the state observed by the agent is augmented with auxiliary information. In addition to the raw topological state $s_t$, the observation includes the normalized step count $t/T$ where $T$ is the episode length, and the normalized accumulated reward $r_{\text{cum}} / k$.

\paragraph{Termination and truncation} We impose two distinct end-of-episode conditions. \emph{Termination} occurs dynamically if the agent's performance degrades beyond an acceptable bound. If the step reward $r_t$ falls below the survival difficulty threshold $\mathcal{D}$, the episode ends immediately. \emph{Truncation} occurs when the agent reaches the predefined temporal horizon of the environment, $t = T$.

\subsection{Integration} 

The agent starts in a state randomly sampled from the unit hypercube, $S_{init} \sim \mathbb{U}([0,1]^{N_s})$. At each discrete time step $t$, the agent observes the current state vector $s_t \in \mathcal{S} = [0, 1]^{N_s}$ and executes an action $a_t \in \mathcal{A} = [0, 1]^{N_a}$. The environment simultaneously computes the oracle action $a^{\star}_t = \pi^{\star}(s_t)$, and computes the reward $R_t = f(a_t, a^{\star}_t)$. It then updates the internal state via the transition function $\mathcal{T}(s_t, a_t)$. We repeat this procedure until the trajectory is terminated due to insufficient agent performance, or truncated due to a given maximum episode length. Consequently, the maximum achievable return is $T$, whereas the best trivial policy--always predicting $0.5 \mathds{1}_{N_a}$--would yield a return of $0$. Naturally, the latter only applies when $r_{min} = \mathcal{D} = 0$.

\subsection{Systematic within- and out-of-distribution evaluation}

To evaluate a policy under \gls{acr:wd} and \gls{acr:ood} regimes, we sample states within and beyond the unit hypercube, which bounds the training state space. We then evaluate the performance of our agents against the optimal policy based on $\tilde{r}_t$.\footnote{Note that we employ the \gls{acr:mae} instead if the reward as a performance criterion, as the reward is deliberately designed to provide no signal for large deviations to prevent a noisy reward signal. It therefore does not differentiate bad actions, which is undesirable for a final evaluation.} To assess \gls{acr:ood} performance, we define a sequence of $N$ nested hypercubes $\mathcal{X}_{\epsilon}$ centered at $c = [0.5, \dots, 0.5]^\top \in \mathbb{R}^{N_s}$. For an expansion factor $\epsilon \geq 0$, we define the expanded state space via the $\ell_\infty$-norm, $\mathcal{X}_{\epsilon} = \left\{ s \in \mathbb{R}^{N_s} \mid \|s - c\|_\infty \leq \frac{1+\epsilon}{2} \right\}$, where $\mathcal{X}_0 = [0, 1]^{N_s}$ represents the nominal unit hypercube. We partition the \gls{acr:ood} space into a set of mutually exclusive categories $\{\mathcal{C}_m\}_{m=1}^N$. A state $s$ belongs to the $m$-th \gls{acr:ood} category, corresponding to the interval $(\epsilon_{m-1}, \epsilon_m]$, if and only if $s \in \mathcal{X}_{\epsilon_m} \setminus \mathcal{X}_{\epsilon_{m-1}}$. This formulation ensures that each state is mapped to a unique category based on its maximum axial deviation from the nominal boundaries. Specifically, a state $s$ is categorized by its expansion level $E(s) = 2\|s - c\|_\infty - 1$; it belongs to $\mathcal{C}_m$ if $E(s) \in (\epsilon_{m-1}, \epsilon_m]$.

\section{Numeric studies}

 To demonstrate the empirical utility of our framework, we evaluate three canonical \gls{acr:rl} algorithms--\gls{acr:sac} \citep{haarnoja_soft_2018}, \gls{acr:td3} \citep{fujimoto_td3}, and \gls{acr:ppo} \citep{schulman2017proximalpolicyoptimizationalgorithms}--across a diverse set of \gls{acr:sme} configurations. We select these baselines due to their widespread adoption and to ensure robust algorithmic variety, allowing us to assess our testbed across both on-policy and off-policy methods, as well as stochastic and deterministic optimization paradigms. We perform ablations across six task complexity dimensions to isolate environment-induced variations in learning behavior, and systematically evaluate \gls{acr:wd} and \gls{acr:ood} performance. Importantly, the primary objective of the numeric studies conducted in this paper is explicitly not to yield novel empirical discoveries. Instead, they serve a dual purpose: to showcase examples of the diverse experimental configurations, analyses, and insights enabled by \glspl{acr:sme}, and to provide rigorous verification. By confirming that these configurations produce sound learning dynamics under which established algorithms behave as expected, we establish that the environment functions as intended. For hyperparameters, we used the standard specification from the \textsc{StableBaselines3} library \citep{antonin_raffin_stable-baselines3_2024}. We set $T = 100$. We conducted 400 evaluations, evenly spaced throughout the training procedure. Each evaluation consisted of five trajectories, with the agent's final score computed as the average return across these runs.

\paragraph{Ablations.} We define a default specification, comprising $N_{s} = 8$ state channels and $N_{a} = 4$ action channels, reward distribution interval $k=1$, minimal distributed step reward $r_{min} = 0$, optimal policy complexity $\mathcal{C}_{\pi^{\star}} = 1$, and survival difficulty $\mathcal{D}=0$. Based on this default, we compute ablations across different environmental characteristics. Figure~\ref{fig:ppo_sac_ablations} displays our findings. We provide an enlarged variant of Figure~\ref{fig:ppo_sac_ablations} in the Supplementary Material, Section~\ref{sec:ablations_fullpage}.

\begin{figure}[htbp]
    \centering
    \vspace{-0.3cm}
    \includegraphics[width=1\linewidth]{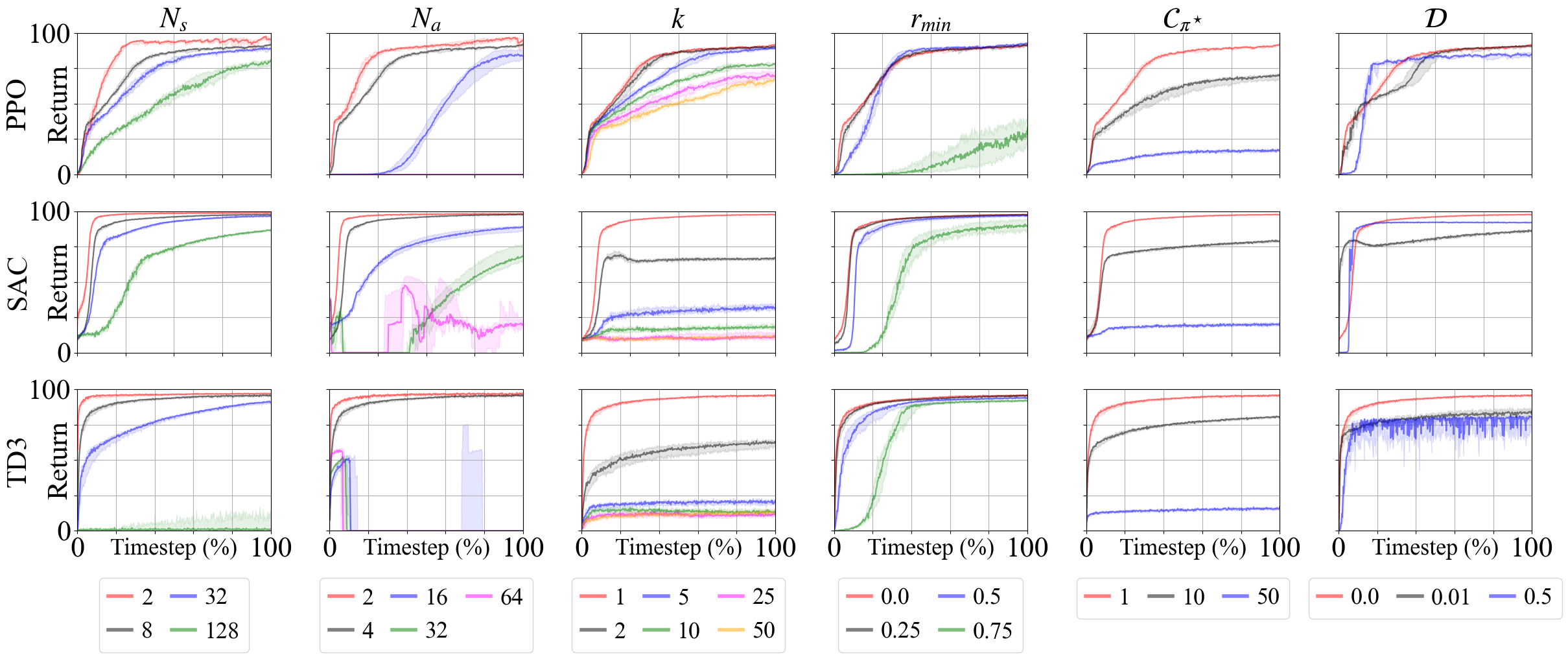}
    \caption{Ablations for PPO, SAC and TD3 across different task configurations over $10$ seeds. Curves indicate median evaluation performance, smoothed over $10$ points. Shaded areas indicate interquartile ranges.}
    \label{fig:ppo_sac_ablations}
\end{figure}

 Our findings reveal that the evaluated algorithms exhibit distinct sensitivities to the modulation of environmental characteristics. Specifically, \gls{acr:ppo} handles large reward distribution intervals better than \gls{acr:td3} and \gls{acr:sac}. This divergence is expected, as \gls{acr:ppo}'s use of Generalized Advantage Estimation allows it to robustly assign credit over extended horizons. Conversely, we find that \gls{acr:ppo} is more susceptible to larger minimum rewards, while \gls{acr:sac} displays the highest robustness to expansive state and action spaces. Finally, \gls{acr:td3} performs exceptionally well in simpler settings--likely due to the high sample efficiency of deterministic updates--but its performance deteriorates most rapidly as dimensionality increases.

\paragraph{Evaluation.} For each trial, we performed comprehensive \gls{acr:wd} and \gls{acr:ood} evaluations. We evaluate the agent performance across $50{,}000$ states within six disjoint regions, defined by the unit hypercube expansion set $\mathcal{E} = \{0.2m \mid m = 0, \dots, 5\}$. Results for an exemplary experiment are displayed in Figure~\ref{fig:evaluation_results}. This experiment employed the standard configuration across three levels of optimal policy complexity $\mathcal{C}_{\pi^{\star}}$, yet with $N_s = N_a = 2$ for intuitive visualization. Full evaluation results for all trials from the ablation study are displayed in Appendix~\ref{sec:full_evaluation_results}, Figure~\ref{fig:heatmap_fullpage}.

\begin{figure}[htbp]
    \centering
    \vspace{-0.3cm}
    \includegraphics[width=1\linewidth]{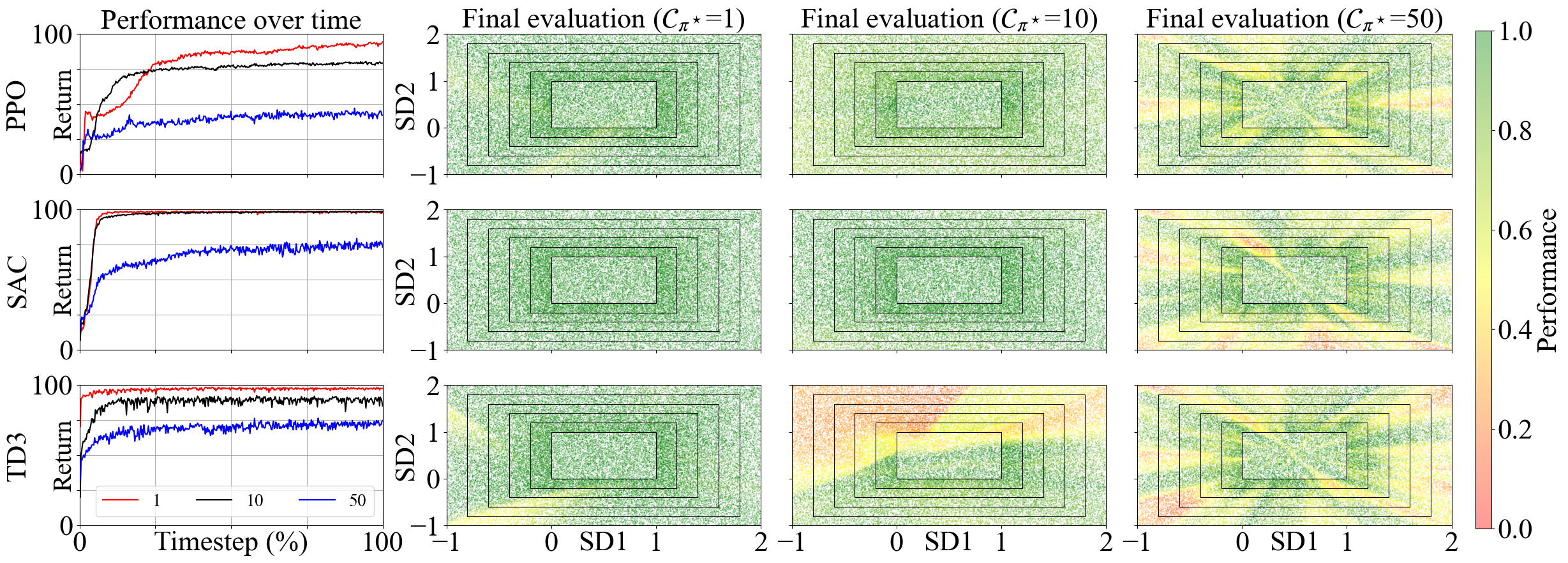}
    \caption{Evaluation performance during training (column 1) and final within-distribution and out-of-distribution performance for PPO, SAC and TD3 across different complexities of the optimal policy the agent seeks to mirror ($\mathcal{C}_{\pi^{\star}}$, columns 2-4). Performance is defined as the complement of the mean average error between action and optimal action $\tilde{r}_t$. States within $\mathbb{R}^2(0,1)$ are within-distribution. States beyond the unit square are out-of-distribution. SD = State Dimension.}
    \label{fig:evaluation_results}
\end{figure}


Our findings reveal that performance decays with distance to the training manifold. Across all experiments, we find an average decrease by $1.38\%$ $(\text{SD}=1.90)$ in the $\text{OOD}_{0-20\%}$ setting compared to the \gls{acr:wd} condition, which increases to a $5.10\%$ $(\text{SD} = 4.70)$ in the $\text{OOD}_{81-100\%}$ setting. We further observe a positive association between \gls{acr:wd} performance and the extent of performance decrease when moving \gls{acr:ood} across all experiments, $\cos(\text{WD}, \text{WD}-\text{OOD}_{81-100\%} ) = 0.66$.

\section{Discussion}

 \paragraph{Limitations} The core advantage of \glspl{acr:sme}--their exact analytical tractability--necessitates specific structural trade-offs. To guarantee precise monitoring and distributional stability, the framework relies on measure-preserving mappings. Consequently, \glspl{acr:sme} are designed to evaluate fundamental learning dynamics rather than to simulate extreme topological irregularities, such as discontinuous dynamics, severe bottlenecks, or heavy-tailed distributions encountered in some real-world domains. Additionally, because the measure preservation of the \gls{acr:dun} relies on the \gls{acr:clt}, exact preservation is marginally relaxed in low-dimensional spaces. However, as demonstrated in the Supplementary Material, Section~\ref{sec:policy_discussion}, the policy does not collapse, ensuring that the integrity of the learning task remains uncorrupted. Finally, because the optimal policy may naturally saturate in far-\gls{acr:ood} regions, \gls{acr:ood} metrics should be interpreted thoughtfully when evaluating agents that employ strict action clipping, particularly prior to convergence. 

 \paragraph{Future directions} The experiments showcased in this work represent only a fraction of the potential applications for \glspl{acr:sme}. A particularly promising direction is offline RL, where the ability to precisely control dataset quality offers significant potential for granular analytical insights. We provide an exemplary offline \gls{acr:rl} experiment demonstrating these diagnostic benefits in the Supplementary Material, Section~\ref{sec:offline_rl}. Beyond offline learning, the unique mathematical and structural properties of \glspl{acr:sme} open several exciting avenues for future research across domains such as continual and non-stationary learning, safe \gls{acr:rl}, and representation learning. 

\paragraph{Conclusion} In this work, we proposed \glspl{acr:sme}, a novel class of continuous control benchmarks designed to bridge the gap between analytically tractable toy problems and complex, high-dimensional tasks. We detailed the theoretical foundations of the environment's transition kernel and \gls{acr:dun}-driven optimal policies, and empirically demonstrated their distinct advantages. By providing complete parametric configurability, ground-truth access to the optimal policy, and mathematically precise state space boundaries enabling rigorous \gls{acr:wd} and \gls{acr:ood} evaluation, \glspl{acr:sme} offer a standardized testbed for a comprehensive analysis of learning dynamics and difficulty factors. We hope that this framework will be of significant use in future research, equipping the \gls{acr:rl} community with the diagnostic tools necessary to meticulously monitor, analyze, and ultimately improve \gls{acr:rl} algorithms.

\newpage

\bibliographystyle{unsrtnat}  
\bibliography{references}  

\newpage

\appendix

\section{Full evaluation results}\label{sec:full_evaluation_results}

\begin{figure}[h!]
    \centering
    \includegraphics[width=.7\linewidth]{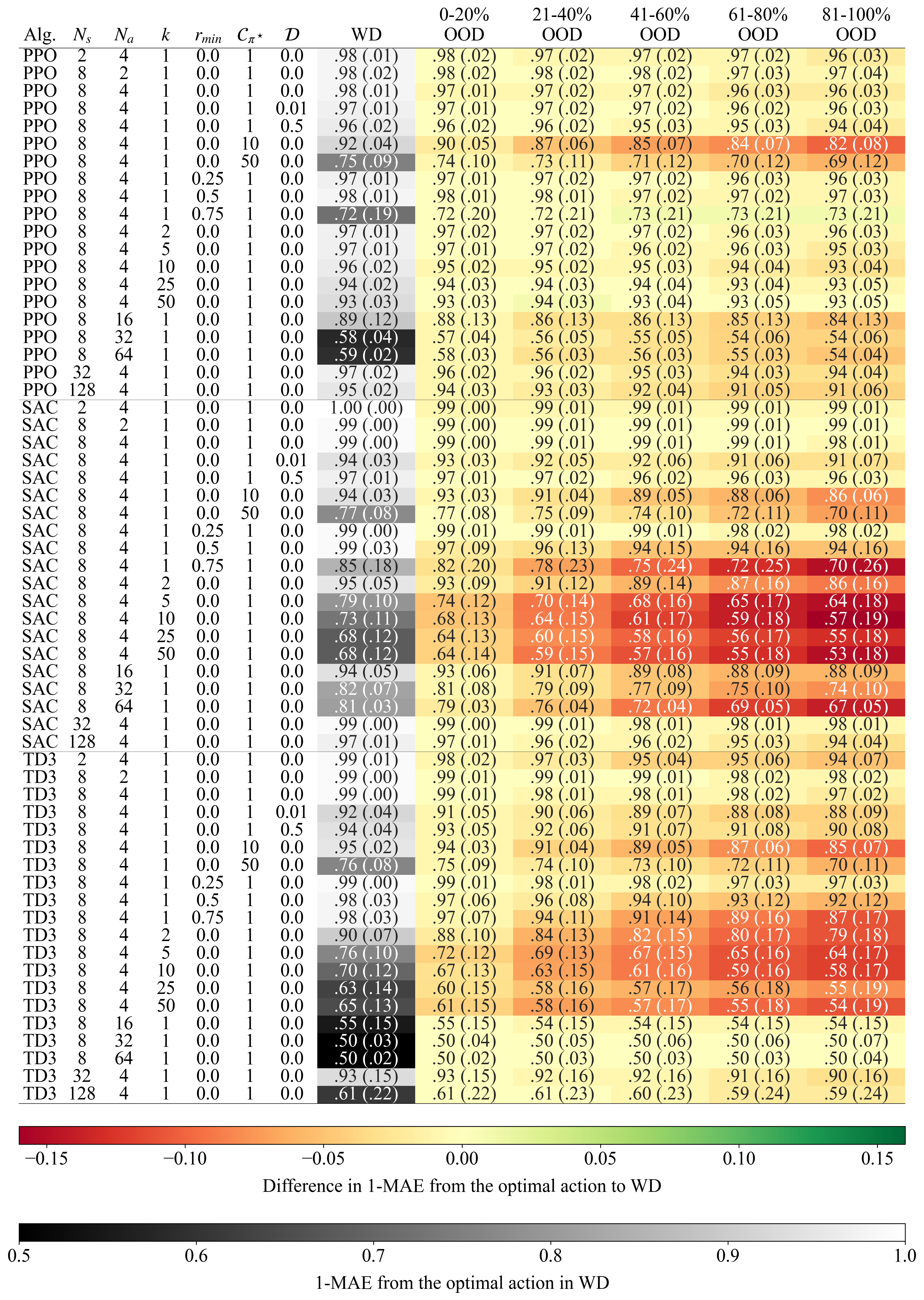}
    \caption{Performance by proximity to the training distribution for PPO, SAC and TD3 across 10 seeds. Alg. = Algorithm, WD = Within-distribution, OOD = Out-of-distribution, MAE = Mean Average Error.}
    \label{fig:heatmap_fullpage}
\end{figure}

\section{Discussion of the optimal policy}\label{sec:policy_discussion}

To maintain the integrity of the learning environment, the optimal policy must be approximately measure-preserving. Specifically, given uniformly distributed states, the resulting optimal actions must be sufficiently diverse. This property is crucial for ensuring consistent difficulty and comparability across environment instantiations. If the target policy were to collapse into a narrow action subspace, it would inadvertently trivialize the objective, allowing the agent to succeed using simple, state-independent policies. By maintaining sufficient action diversity, we prevent this corruption of the learning dynamics.

\subsection{Theoretical analysis of measure preservation}

For the optimal policy, the architecture relies on the \gls{acr:clt} over finite-width linear transformations. Passing the resulting Irwin-Hall distribution through the standard normal \gls{acr:cdf} asymptotically approximates a uniform distribution.

\begin{theorem}[Asymptotic preservation of the uniform measure]
\label{thm:policy_asymptotic}
Let $X \in \mathbb{R}^n$ be a random vector representing the input with independent components $X_i \sim \mathcal{U}(0,1)$. Let $T: \mathbb{R}^n \to \mathbb{R}^m$ be a layer of the optimal policy mapping defined as $T(X) = \Phi(W(X - \mathbf{\mu}))$, where $\mu_i = 0.5$, $\Phi$ is the standard normal CDF applied element-wise, and $W \in \mathbb{R}^{m \times n}$ is a weight matrix with mutually orthogonal rows satisfying $WW^\top = 12 I_m$. As $n \to \infty$, the output $Y = T(X)$ converges weakly to the joint uniform distribution $\mathcal{U}(0,1)^m$.
\end{theorem}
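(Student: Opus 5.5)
The plan is to prove convergence of the pre-activation $Z = W(X-\mu)$ to $\mathcal{N}(0, I_m)$, and then push this through $\Phi$ with the continuous mapping theorem. Throughout, $m$ is fixed while $n \to \infty$, and $W = W^{(n)}$ is a sequence of matrices indexed by $n$.

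\emph{Step 1: first and second moments.} Write $\xi_j = X_j - \tfrac12$, so the $\xi_j$ are i.i.d., centred, bounded by $\tfrac12$, and have variance $\tfrac1{12}$. Then
\[
\mathrm{Cov}(Z) = \tfrac1{12} W W^\top = I_m
\]
for every $n$. This is exactly where the normalisation $WW^\top = 12 I_m$ enters.

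\emph{Step 2: Cramér--Wold reduction.} Fix $t \in \mathbb{R}^m$ and set $c = W^\top t \in \mathbb{R}^n$. Then
\[
t^\top Z = \sum_{j=1}^n c_j \xi_j,
\]
which is a sum of independent, centred, bounded terms with total variance $\tfrac1{12}\|c\|_2^2 = \|t\|_2^2$. I will invoke the Lindeberg--Feller CLT for triangular arrays. Each summand is bounded by $|c_j|/2 \le \tfrac12\|t\|_1 \max_{k,j}|W_{kj}|$. Hence if the delocalisation condition
\[
\delta_n := \max_{k,j}|W^{(n)}_{kj}| \to 0
\]
holds, then for any $\varepsilon>0$ every summand eventually lies below $\varepsilon\|t\|_2$. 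The Lindeberg sum is then identically zero for large $n$, so $t^\top Z \Rightarrow \mathcal{N}(0,\|t\|_2^2)$. By Cramér--Wold this gives $Z \Rightarrow \mathcal{N}(0,I_m)$.

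\emph{Step 3: passing through $\Phi$.} The coordinatewise map $\Phi^{\otimes m}$ is continuous, so the continuous mapping theorem yields $Y \Rightarrow \Phi^{\otimes m}(G)$ with $G \sim \mathcal{N}(0,I_m)$. The coordinates of $G$ are jointly Gaussian and uncorrelated, hence independent. Each $\Phi(G_k)$ is $\mathcal{U}(0,1)$ by the probability integral transform. Therefore the limit is $\mathcal{U}(0,1)^m$.

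\emph{Main obstacle: the hypothesis in Step 2.} Orthogonality of rows with norm $\sqrt{12}$ alone does not imply $\delta_n \to 0$. For example, rows $\sqrt{12}\,e_1,\dots,\sqrt{12}\,e_m$ satisfy $WW^\top = 12 I_m$, yet give $Y_k = \Phi(\sqrt{12}(X_k - \tfrac12))$, which is not uniform for any $n$. So the statement as written is false without an extra condition. I would make the needed hypothesis explicit ($\delta_n \to 0$) and then justify it for the paper's initialization: a semi-orthogonal $W$ obtained from orthogonalizing a Gaussian matrix, i.e.\ Haar-distributed on the Stiefel manifold and scaled by $\sqrt{12}$.

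For such $W$, each row is $\sqrt{12}$ times a uniform point on $S^{n-1}$. Concentration on the sphere together with a union bound over the $mn$ entries gives $\delta_n = O\bigl(\sqrt{\log n / n}\bigr)$ with probability $1 - O(n^{-2})$. Borel--Cantelli then gives $\delta_n \to 0$ almost surely over the initialization draw. The theorem would therefore be stated as holding almost surely with respect to the random weight initialization, or deterministically for any sequence of $W$ satisfying the delocalisation condition.
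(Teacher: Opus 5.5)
Your skeleton is the same as the paper's proof. You centre the input, use $WW^\top = 12 I_m$ to get $\mathrm{Cov}(Z) = I_m$, obtain $Z \Rightarrow \mathcal{N}(0, I_m)$, use that uncorrelated jointly Gaussian coordinates are independent, and finish with the probability integral transform and the continuous mapping theorem.

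The real difference is the CLT step, and there your route is the correct one. The paper simply invokes ``the Multivariate CLT'' because $Z$ is a linear combination of independent bounded variables. It imposes nothing on $W$ beyond orthogonal rows of norm $\sqrt{12}$.

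Your counterexample with rows $\sqrt{12}\,e_1^\top,\dots,\sqrt{12}\,e_m^\top$ is valid. Each $Y_k$ is supported on $[\Phi(-\sqrt{3}),\Phi(\sqrt{3})]$ for every $n$. So that step fails and the theorem as literally stated is false; the paper's version buys only brevity.

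Your Cram\'er--Wold plus Lindeberg--Feller argument under $\max_{k,j}|W^{(n)}_{kj}| \to 0$ supplies exactly the missing uniform-negligibility condition. The bound $|c_j\xi_j| \le \tfrac12\|t\|_1\delta_n$ makes the Lindeberg sum vanish eventually, as you say.

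Your sphere-concentration and Borel--Cantelli argument then recovers the claim almost surely for a Haar-distributed semi-orthogonal initialization. Present that as an assumption about the construction: it is the natural reading, but the paper never says how $W$ is generated.

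Finally, the paper's closing remark that stacking layers preserves the limit ``by induction'' is justified by neither argument. A subsequent layer has fixed input width $m$, and its inputs are only asymptotically independent uniforms. That remark lies outside the stated theorem, so you need not address it.
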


\begin{proof}
Define the centered input vector $\tilde{X} = X - 0.5$. Because $X_i \sim \mathcal{U}(0,1)$, the components of $\tilde{X}$ are independent, zero-mean random variables with variance $\text{Var}(\tilde{X}_i) = \frac{1}{12}$. Consequently, the covariance matrix of the centered input is $\Sigma_{\tilde{X}} = \frac{1}{12} I_n$.

Let $Z = W\tilde{X}$ be the pre-activation vector. By linearity of expectation, $\mathbb{E}[Z] = \mathbf{0}$. The covariance matrix of $Z$ is
\begin{equation}
\Sigma_Z = W \Sigma_{\tilde{X}} W^\top = W \left(\frac{1}{12} I_n\right) W^\top = \frac{1}{12} (W W^\top).
\end{equation}
By the premise of the theorem, $WW^\top = 12 I_m$, which yields $\Sigma_Z = I_m$. Thus, the components of $Z$ are uncorrelated and have unit variance.

Because $Z$ is obtained via a linear combination of independent, bounded random variables, we invoke the Multivariate \gls{acr:clt}. As $n \to \infty$, the distribution of $Z$ converges weakly in distribution to a standard multivariate Gaussian, $Z \xrightarrow{d} \mathcal{N}(\mathbf{0}, I_m)$. Crucially, the identity covariance implies that the components of $Z$ become mutually independent standard normal random variables in the limit.

Next, we apply the Probability Integral Transform (PIT). For any scalar random variable $Z_j \sim \mathcal{N}(0,1)$, applying its continuous CDF $\Phi$ yields $\Phi(Z_j) \sim \mathcal{U}(0,1)$. By the Continuous Mapping Theorem, since $Z \xrightarrow{d} \mathcal{N}(\mathbf{0}, I_m)$, the component-wise application of $\Phi$ implies:
\begin{equation}
Y = \Phi(Z) \xrightarrow{d} \mathcal{U}(0,1)^m
\end{equation}
By induction, stacking $K$ such layers preserves the weak convergence to the uniform distribution at each step, thereby asymptotically preserving the entropy of the input distribution.
\end{proof}

\subsection{Empirical analysis of measure preservation}

As the outlined approach for generating optimal actions relies on the \gls{acr:clt}, we may not theoretically guarantee that the optimal policy is measure-preserving, especially for low-dimensional states. However, as shown in Figure~\ref{fig:policy_by_complexity_and_input}, the optimal policy does not collapse, and approximate uniformity is recovered very quickly as complexity and/or the number of inputs increase. Consequently, the optimal policy preserves the integrity of the learning task, even for low-dimensional states.

\begin{figure}[h!]
    \centering
    \includegraphics[width=1\linewidth]{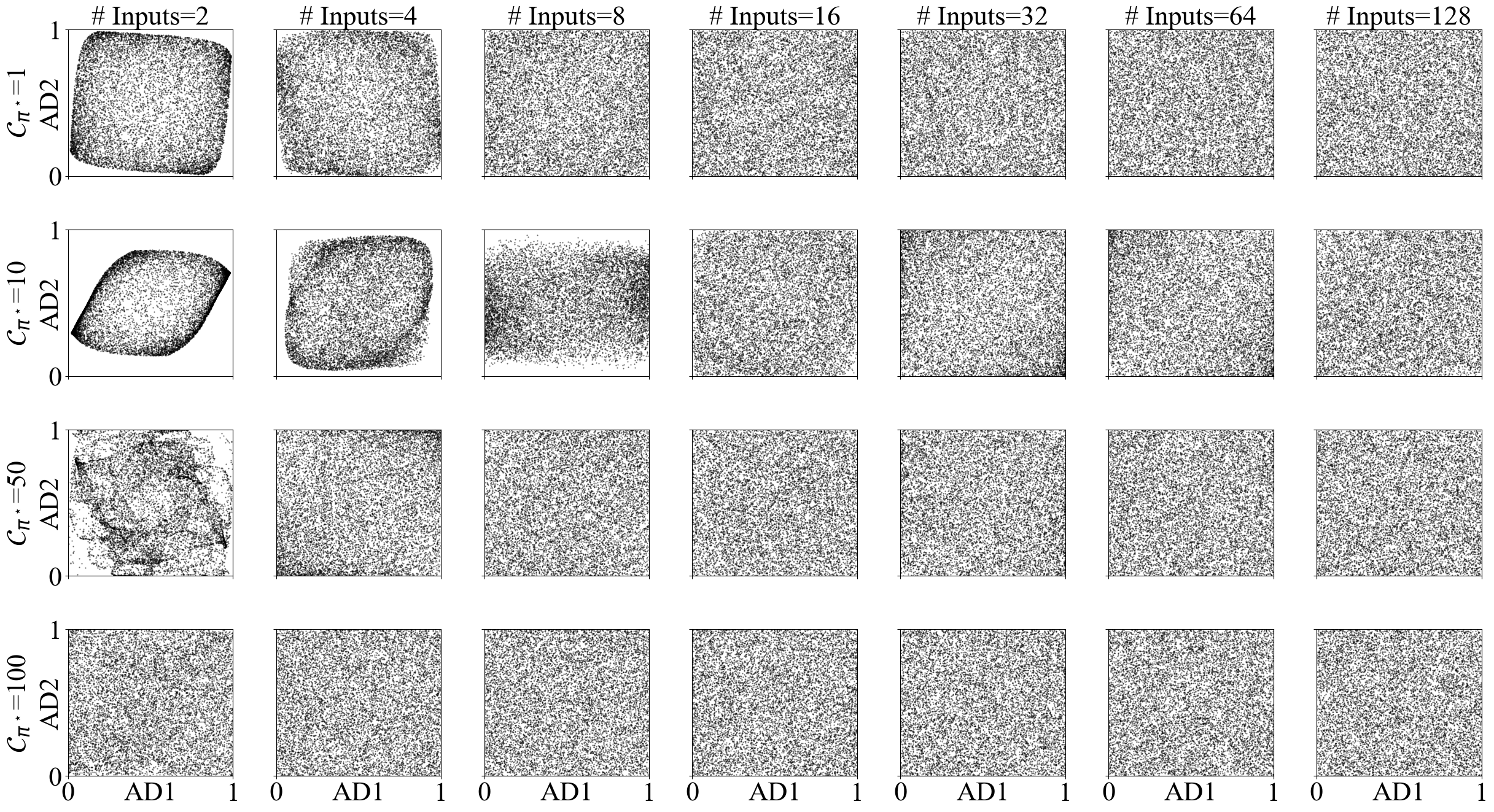}
    \caption{Optimal action distribution under varying input dimensions $N_s$ and complexities $\mathcal{C}_{\pi^{\star}}$ for $10{,}000$ states, uniformly sampled from the unit hypercube that bounds the state space. AD = Action dimension.}
    \label{fig:policy_by_complexity_and_input}
\end{figure}

\section{Discussion of the transition kernel}\label{sec:transition_discussion}

To yield robust and analytically sound environment dynamics, the transition kernel must satisfy three fundamental properties. First, it must achieve \textit{measure preservation} to guarantee that the state space neither collapses nor explodes, ensuring all states remain uniformly accessible. Second, it must exhibit \textit{action mass preservation}, confirming that the agent's action signals are fully conserved during projection and no action is arbitrarily discarded. Finally, it must guarantee \textit{bounded variance transfer}, ensuring that the variance injected by the agent remains strictly bounded, preventing both vanishing and exploding signal. We formally define and prove these three characteristics below.

\subsection{Exact measure preservation}

The transition function relies on shifting the state by an action-conditional constant and applying a periodic triangle-wave activation. This operation is exactly measure-preserving on the unit hypercube.

\begin{theorem}[Exact preservation of uniform measure]
\label{thm:transition_exact}
Let the state space be the $N_s$-dimensional unit hypercube $\mathcal{X} = [0, 1]^{N_s}$. Assume the input state $S \sim \mathcal{U}(\mathcal{X})$. For any arbitrary fixed action $a$ and bias $b$, let $c(a, b) = b + aW \in \mathbb{R}^{N_s}$ be the resulting constant shift vector. Let $\mathcal{T}: \mathcal{X} \to \mathcal{X}$ be the transition function such that $S' = \mathcal{T}(S)$, defined component-wise by the triangle-wave activation $f(x) = \frac{1}{\pi} \arccos(\cos(2\pi x))$. Then, the output state maintains the exact uniform distribution, $S' \sim \mathcal{U}(\mathcal{X})$.
\end{theorem}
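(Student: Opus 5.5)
The plan is to reduce the statement to a one-dimensional fact and then lift it to the hypercube by independence. Since $S \sim \mathcal{U}([0,1]^{N_s})$, its components $S_1,\dots,S_{N_s}$ are i.i.d.\ $\mathcal{U}(0,1)$. Fix $a$ and $b$, so the shift $c = c(a,b)$ is a deterministic vector. The map $\mathcal{T}$ acts coordinatewise: $S'_j = f(S_j + c_j)$, and each output coordinate depends only on its own input coordinate. The $S'_j$ are therefore independent, being measurable functions of independent variables. It then suffices to show that, for every constant $c_j \in \mathbb{R}$, $f(U + c_j) \sim \mathcal{U}(0,1)$ whenever $U \sim \mathcal{U}(0,1)$. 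The product of the uniform marginals is then $\mathcal{U}(\mathcal{X})$.

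For the one-dimensional claim I would proceed in two steps.

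\textbf{Step 1: reduce the shifted variable to a uniform variable on a period.} The function $f(x) = \frac{1}{\pi}\arccos(\cos(2\pi x))$ has period $1$, because $\cos(2\pi x)$ does. Hence $f(U + c) = f(V)$ with $V = (U + c) \bmod 1$. The fractional part of a $\mathcal{U}(0,1)$ variable shifted by a constant is again $\mathcal{U}(0,1)$. This is the rotation invariance of Lebesgue measure on the circle $\mathbb{R}/\mathbb{Z}$. Concretely, $[0,1)$ splits at the point $1-\{c\}$ into two intervals that are translated rigidly onto $[\{c\},1)$ and $[0,\{c\})$, so lengths are preserved.

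\textbf{Step 2: show the triangle wave sends $\mathcal{U}(0,1)$ to $\mathcal{U}(0,1)$.} First identify $f$ on $[0,1]$ explicitly. For $x \in [0,\tfrac12]$ we have $2\pi x \in [0,\pi]$, so $\arccos(\cos 2\pi x) = 2\pi x$ and $f(x) = 2x$. For $x \in [\tfrac12,1]$, write $\cos(2\pi x) = \cos(2\pi(1-x))$ with $2\pi(1-x) \in [0,\pi]$, which gives $f(x) = 2 - 2x$. Now compute the CDF directly. For $y \in [0,1]$,
\begin{equation}
\Pr(f(V) \le y) = \Pr\!\left(V \le \tfrac{y}{2}\right) + \Pr\!\left(V \ge 1 - \tfrac{y}{2}\right) = \tfrac{y}{2} + \tfrac{y}{2} = y,
\end{equation}
which is the uniform CDF. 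Equivalently, each of the two linear branches pushes half the mass uniformly onto $[0,1]$ with density $\tfrac12$, and the two halves sum to density $1$.

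Finally, I would combine the coordinates by independence: $\Pr(S' \in \prod_j [0,y_j]) = \prod_j y_j$ for all $y \in [0,1]^{N_s}$. This determines the law $\mathcal{U}(\mathcal{X})$, and it is also clear that $\mathcal{T}$ maps $\mathcal{X}$ into $\mathcal{X}$.

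None of these steps is deep. The one point I would state carefully is the assumption behind ``fixed action'': $a$ must be a constant, or at least independent of $S$. If $a$ were allowed to depend on $S$, for example $a = \pi(S)$, then $c$ would be random and correlated with $S$, and neither the shift-invariance step nor the independence step would apply. I would therefore read the theorem as conditional on a fixed $(a,b)$. I would also note the boundary handling for a possibly negative shift, where $\{c\}$ is the fractional part $c - \lfloor c \rfloor$. The second minor point is verifying the closed form of $\arccos \circ \cos$ on each half-period, which is routine.
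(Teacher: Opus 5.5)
Your proposal is correct and follows essentially the same route as the paper's proof: a coordinatewise reduction, then $1$-periodicity plus translation invariance on $\mathbb{R}/\mathbb{Z}$ to get a uniform fractional part, then the two linear branches $2y$ and $2(1-y)$ of the triangle wave each contributing half the mass, and finally recombination by independence; the only difference is that you compute the CDF where the paper takes preimages of arbitrary intervals $[u,v]$. Your remark that $a$ must be fixed, or at least independent of $S$, is a worthwhile clarification that the paper leaves implicit in the phrase ``fixed action''.
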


\begin{proof}
Let $c = c(a,b) \in \mathbb{R}^{N_s}$ be the fixed shift vector. We analyze the activation function component-wise. If the marginals are independent and uniformly distributed on $[0,1]$, the joint distribution is identically $\mathcal{U}(\mathcal{X})$.

Consider a single scalar component $s \sim \mathcal{U}(0,1)$ and its corresponding shift $c_i$. The operation first applies an affine shift $s + c_i$. The function $f(x) = \frac{1}{\pi} \arccos(\cos(2\pi x))$ is $1$-periodic and symmetric. Therefore, $f(s + c_i) = f((s + c_i) \bmod 1)$. 

Let $y = (s + c_i) \bmod 1$. Because translating a uniform random variable on a periodic domain (the torus $\mathbb{R}/\mathbb{Z}$) preserves the Haar measure, $y$ is exactly distributed as $\mathcal{U}(0,1)$.

We must now show that applying $f$ to $y$ preserves this uniform distribution. The function $f(y)$ piecewise evaluates to
\begin{equation}
f(y) = \begin{cases} 
2y & \text{if } y \in [0, 0.5] \\ 
2(1-y) & \text{if } y \in (0.5, 1] 
\end{cases}
\end{equation}

To find the probability density of $z = f(y)$, let $[u, v] \subseteq [0, 1]$ be an arbitrary interval. The pre-image $f^{-1}([u, v])$ consists of exactly two disjoint intervals in the domain of $y$: $[\frac{u}{2}, \frac{v}{2}]$ and $[1 - \frac{v}{2}, 1 - \frac{u}{2}]$. 

Because $y \sim \mathcal{U}(0,1)$, the probability mass of $z \in [u, v]$ is the sum of the Lebesgue measures of these pre-image intervals,
\begin{equation}
\mathbb{P}(z \in [u, v]) = \left(\frac{v}{2} - \frac{u}{2}\right) + \left(\left(1 - \frac{u}{2}\right) - \left(1 - \frac{v}{2}\right)\right) = v - u.
\end{equation}
Since the probability mass of any interval $[u, v]$ maps exactly to its length $v - u$, it follows that $z \sim \mathcal{U}(0,1)$. Because this mapping applies component-wise to independent variables, the joint distribution remains exactly $\mathcal{U}(\mathcal{X})$.
\end{proof}

\subsection{Action mass preservation}

\begin{proposition}[Action mass preservation]

Let $a_t \in [0, 1]^{N_a}$ denote the action vector at time step $t$, and let $p_t = a_t W \in \mathbb{R}^{N_s}$ be its linear projection into the pre-activation state space, where $W \in \mathbb{R}^{N_a \times N_s}$ is a weight matrix with non-negative entries ($W_{ij} \ge 0$) that is initialized to be row-stochastic ($\sum_{j=1}^{N_s} W_{ij} = 1$ for all $i$). The $L_1$ norm of the action signal is perfectly conserved during projection, such that $\|p_t\|_1 = \|a_t\|_1$.
\end{proposition}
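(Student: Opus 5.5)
The plan is a direct computation that relies on non-negativity throughout, so that absolute values can be dropped and the order of summation exchanged. The key observation is that every entry of $p_t$ is a non-negative combination of non-negative numbers. Since $a_t \in [0,1]^{N_a}$, each $a_{t,i} \ge 0$. By hypothesis $W_{ij} \ge 0$. Hence each component $p_{t,j} = \sum_{i=1}^{N_a} a_{t,i} W_{ij}$ is non-negative, so $|p_{t,j}| = p_{t,j}$ and likewise $|a_{t,i}| = a_{t,i}$.

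With the absolute values removed, I will write the $L_1$ norm of the projection as a double sum and swap the order of summation (both sums are finite):
\begin{equation*}
\|p_t\|_1 = \sum_{j=1}^{N_s} \sum_{i=1}^{N_a} a_{t,i} W_{ij} = \sum_{i=1}^{N_a} a_{t,i} \sum_{j=1}^{N_s} W_{ij}.
\end{equation*}
The row-stochastic condition $\sum_{j} W_{ij} = 1$ then collapses the inner sum, leaving $\sum_i a_{t,i} = \|a_t\|_1$. Equivalently, $\|p_t\|_1 = a_t W \mathbf{1}_{N_s} = a_t \mathbf{1}_{N_a} = \|a_t\|_1$.

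There is no real obstacle here. The only step needing care is the sign argument, since the identity fails without it. If actions could be negative, cancellations inside $p_{t,j}$ would only give the inequality $\|p_t\|_1 \le \|a_t\|_1$, by the triangle inequality. So I will state explicitly that the action box $[0,1]^{N_a}$ and the non-negative initialization of $W$ are both used. I may also remark that the same computation applies to any non-negative action vector, not only those in the unit box.
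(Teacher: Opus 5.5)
Your proposal is correct and matches the paper's proof essentially step for step: non-negativity of $a_{t,i}$ and $W_{ij}$ lets you drop the absolute values, you exchange the finite sums, and the row-stochastic condition collapses the inner sum to give $\|a_t\|_1$. Your additional remark, that with signed actions one would only get $\|p_t\|_1 \le \|a_t\|_1$, is correct and is a point the paper does not make.
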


\begin{proof}
By definition, the projected action vector is $p_t = a_t W$. Because the action is bounded in the unit hypercube ($a_{t,i} \ge 0$) and the weights are non-negative ($W_{ij} \ge 0$), all components of $p_t$ are strictly non-negative. 
Therefore, the total magnitude, or $L_1$ norm, of the projected signal is simply the sum of its components,
\begin{equation}
    \|p_t\|_1 = \sum_{j=1}^{N_s} p_{t,j} = \sum_{j=1}^{N_s} \left( \sum_{i=1}^{N_a} a_{t,i} W_{ij} \right)
\end{equation}
By exchanging the order of summation, we can isolate the row sums of the weight matrix $W$,
\begin{equation}
    \|p_t\|_1 = \sum_{i=1}^{N_a} a_{t,i} \left( \sum_{j=1}^{N_s} W_{ij} \right)
\end{equation}
By the definition of the transition kernel's row-stochastic initialization, the sum of the weights associated with each action dimension equals unity, $\sum_{j=1}^{N_s} W_{ij} = 1$. Substituting this into the equation yields
\begin{equation}
    \|p_t\|_1 = \sum_{i=1}^{N_a} a_{t,i} (1) = \sum_{i=1}^{N_a} a_{t,i} = \|a_t\|_1
\end{equation}
Thus, the total action mass is exactly conserved regardless of the dimensionalities $N_a$ and $N_s$.
\end{proof}

\subsection{Bounded variance transfer}

\begin{proposition}[Bounded variance transfer]
Assume the agent acts according to a maximum-entropy exploratory policy such that $a_t \sim \mathcal{U}(0,1)^{N_a}$. Let $\Sigma_p = \text{Cov}(p_t)$ denote the covariance matrix of the projected action signal $p_t = a_t W$. The total variance injected into the state space, defined by the trace of $\Sigma_p$, is strictly bounded by $\frac{N_a}{12 N_s} \le \text{Tr}(\Sigma_p) \le \frac{N_a}{12}$.
\end{proposition}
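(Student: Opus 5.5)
The plan is to compute $\Sigma_p$ in closed form, reduce its trace to the squared Frobenius norm of $W$, and then bound that norm row by row using only the row-stochastic, non-negative structure of $W$.

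First, I would write the covariance directly. Since $a_t \sim \mathcal{U}(0,1)^{N_a}$ has independent components, each with variance $\frac{1}{12}$, its covariance is $\text{Cov}(a_t) = \frac{1}{12} I_{N_a}$. Treating $a_t$ as a row vector, the map $a_t \mapsto p_t = a_t W$ is linear, so
\begin{equation}
\Sigma_p = W^\top \text{Cov}(a_t) W = \frac{1}{12} W^\top W .
\end{equation}
This is the same linear-covariance computation used in the proof of Theorem~\ref{thm:policy_asymptotic}. Taking the trace then gives
\begin{equation}
\text{Tr}(\Sigma_p) = \frac{1}{12}\,\text{Tr}(W^\top W) = \frac{1}{12}\sum_{i=1}^{N_a}\sum_{j=1}^{N_s} W_{ij}^2 .
\end{equation}

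Second, since this is a sum over rows, it suffices to show that each row $w_i = (W_{i1},\dots,W_{iN_s})$ satisfies
\begin{equation}
\frac{1}{N_s} \le \sum_{j=1}^{N_s} W_{ij}^2 \le 1 .
\end{equation}

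For the upper bound, non-negativity together with $\sum_j W_{ij} = 1$ forces $0 \le W_{ij} \le 1$. Hence $W_{ij}^2 \le W_{ij}$, and summing over $j$ gives $\sum_j W_{ij}^2 \le \sum_j W_{ij} = 1$.

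For the lower bound, I would apply Cauchy--Schwarz to $w_i$ and the all-ones vector:
\begin{equation}
1 = \Bigl(\sum_{j=1}^{N_s} W_{ij}\Bigr)^2 \le N_s \sum_{j=1}^{N_s} W_{ij}^2 .
\end{equation}
Summing both row bounds over the $N_a$ rows and multiplying by $\frac{1}{12}$ yields $\frac{N_a}{12 N_s} \le \text{Tr}(\Sigma_p) \le \frac{N_a}{12}$.

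There is no real obstacle here; the only point needing care is the word ``strictly''. Both bounds are attained: the upper bound when each row is a standard basis vector, and the lower bound when each row is uniform, $W_{ij} = 1/N_s$. The statement should therefore be read as non-strict inequalities, which is what the displayed $\le$ signs express. I would also remark that the random initialisation $W_{ij} \propto \mathcal{U}(0,1)$ almost surely lands strictly between the two extremes.
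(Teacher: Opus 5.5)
Your proposal is correct and follows the paper's proof step for step. Both arguments use the closed form $\Sigma_p = \tfrac{1}{12}W^\top W$, reduce the trace to $\tfrac{1}{12}\|W\|_F^2$, bound each row from below by Cauchy--Schwarz against the all-ones vector and from above by $W_{ij}^2 \le W_{ij}$, and then sum over the $N_a$ rows. Your observation that both bounds are attained, so the inequalities are non-strict despite the word ``strictly'' and the paper's ``supremum is approached'', is accurate. One caveat on your side remark: for $N_s = 1$ the two bounds coincide, so the almost-sure strict-interior claim needs $N_s \ge 2$.
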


\begin{proof}
Under a maximum-entropy exploratory policy uniformly covering the action space, the components of $a_t$ are independent and identically distributed as $\mathcal{U}(0,1)$. The covariance matrix of the action vector is therefore a scaled identity matrix, $\Sigma_a = \text{Cov}(a_t) = \frac{1}{12} I_{N_a}$.

The covariance matrix of the linearly projected signal $p_t = a_t W$ is given by
\begin{equation}
    \Sigma_p = \text{Cov}(a_t W) = W^\top \Sigma_a W = W^\top \left( \frac{1}{12} I_{N_a} \right) W = \frac{1}{12} W^\top W
\end{equation}

The total variance injected into the environment dynamics is defined by the trace of $\Sigma_p$, 
\begin{equation}
    \text{Tr}(\Sigma_p) = \frac{1}{12} \text{Tr}(W^\top W) = \frac{1}{12} \|W\|_F^2
\end{equation}
where $\|W\|_F^2 = \sum_{i=1}^{N_a} \sum_{j=1}^{N_s} W_{ij}^2$ is the squared Frobenius norm of the weight matrix.

To bound $\text{Tr}(\Sigma_p)$, we must bound the sum of squares for each row $i$, subject to the row-stochastic constraints $W_{ij} \ge 0$ and $\sum_{j=1}^{N_s} W_{ij} = 1$. 

\textbf{Lower Bound:} By the Cauchy-Schwarz inequality, $\left(\sum_{j=1}^{N_s} W_{ij}\right)^2 \le N_s \sum_{j=1}^{N_s} W_{ij}^2$. Since the row sum is 1, this simplifies to $1 \le N_s \sum_{j=1}^{N_s} W_{ij}^2$, yielding a minimum of $\frac{1}{N_s}$ when the weights are perfectly uniform ($W_{ij} = \frac{1}{N_s}$).

\textbf{Upper Bound:} Because $W_{ij} \ge 0$ and $\sum_{j=1}^{N_s} W_{ij} = 1$, each individual weight is bounded by $W_{ij} \le 1$. Consequently, $W_{ij}^2 \le W_{ij}$. Summing over $j$ yields $\sum_{j=1}^{N_s} W_{ij}^2 \le \sum_{j=1}^{N_s} W_{ij} = 1$. The supremum of 1 is approached as the projection collapses to a single state dimension.

Summing these individual row bounds across all $N_a$ action dimensions provides the bounds for the squared Frobenius norm,
\begin{equation}
    \sum_{i=1}^{N_a} \left( \frac{1}{N_s} \right) \le \|W\|_F^2 \le \sum_{i=1}^{N_a} (1) \implies \frac{N_a}{N_s} \le \|W\|_F^2 \le N_a
\end{equation}

Substituting these bounds into the trace equation yields the final bounds on the transferred variance,
\begin{equation}
    \frac{N_a}{12 N_s} \le \text{Tr}(\Sigma_p) \le \frac{N_a}{12}
\end{equation}
This guarantees that the total exploratory variance injected into the transition kernel is strictly positive (bounded away from zero, preventing signal vanishing) and strictly bounded from above (preventing signal explosion), ensuring stable information transfer.
\end{proof}

\subsection{Choice of activation function}
The primary role of the activation function within the transition kernel is to project the unbounded $N_s$-dimensional pre-activation vectors from $\mathbb{R}^{N_s}$ onto the unit hypercube $[0, 1]^{N_s}$. This may theoretically also be achieved by various other functions than the selected triangle wave. Prominent examples of such functions are sigmoid or modulo. However, the sigmoid function saturates for high absolute input values. This possibly leads to a collapse of the state space, which corrupts the learning task, as optimal training performance may be achieved by learning the optimal actions for a fraction of the state space. The modulo operator ($x \pmod 1$) prevents this collapse, but implies a toroidal topology where the edges wrap around causing a jump discontinuity from $1 \to 0$. The triangle wave activation addresses both of these challenges. It effectively prevents the state space collapse induced by activation function saturation, and yields a smooth topology: Trajectories hitting the boundary are reflected rather than teleported. 

\subsection{Lipschitz continuity}
Beyond merely avoiding jump discontinuities, the reflective property of the triangle wave activation ensures that the environment dynamics remain mathematically stable and bounded. Specifically, it guarantees that the transition kernel is globally Lipschitz continuous. Formally, the transition kernel $\mathcal{T}(s,a)$ is a composition of an affine transformation $g(s,a)=s+aW+b$ and the triangle wave activation $\psi(x)=\frac{1}{\pi}\arccos(\cos(2\pi x))$. The Lipschitz constant of a composition of functions is bounded by the product of their respective Lipschitz constants. The piecewise linear function $\psi(x)$ has a bounded derivative of $\pm 2$, yielding a Lipschitz constant $L_\psi=2$. The affine transformation $g(s,a)$ is Lipschitz continuous with constant $L_g \leq \max(1, \|W\|_2)$, corresponding to the identity mapping for states and the induced norm of $W$ for actions. Therefore, the joint Lipschitz constant of the transition kernel is strictly bounded by $L_T \leq 2\max(1, \|W\|_2)$, providing a rigorous guarantee of smooth, predictable state transitions.

\subsection{Suitability as an optimal policy}
\label{sec:learnability}

While both the transition function and the optimal policy achieve approximate measure preservation, their practical implementations present differing optimization landscapes. 

While the exact measure preservation of the transition function is highly appealing, the function does not serve as a suitable approach for designing a learning target. The learning target must allow for a configuration of mapping complexity. Scaling the mathematical complexity of the transition function inherently requires increasing the frequency of the triangle-wave oscillations (i.e., inducing more \emph{folds} in the state space). Due to the spectral bias of deep neural networks, whereby networks preferentially learn low-frequency functions and struggle to approximate highly oscillatory targets, this formulation becomes an extremely difficult learning target as complexity scales, and yields a challenging learning target even in its default configuration \citep{rahaman2019_spectral}. Furthermore, relying exclusively on oscillatory frequency for complexity modulation introduces a structural vulnerability: agents augmented with Fourier features--which explicitly mitigate spectral bias--could trivially neutralize the environment's complexity scaling.

The \gls{acr:dun} circumvents this limitation. Instead of relying on high-frequency geometric folding, it modulates complexity smoothly through architectural depth and orthogonal linear mixings. Although it trades exact measure preservation for asymptotic approximation, it produces a significantly smoother, more well-behaved optimization landscape.

\section{Extension to offline reinforcement learning}\label{sec:offline_rl}

To rigorously evaluate the performance of offline \gls{acr:rl} algorithms under varying degrees of dataset quality and environmental complexity, we design a controlled experimental pipeline. Our framework allows us to systematically degrade the quality of the behavior policy and analyze how offline \gls{acr:rl} algorithms recover or improve upon the data-generating policy.

\subsection{Experimental setup}

\paragraph{Dataset creation} We conduct our experiments using a \gls{acr:sme} in the default configuration. To generate offline datasets of varying quality, we employ a behavior policy that stochastically interpolates between the environment's ground-truth optimal policy and a structured noise policy, instantiated as a randomly initialized \gls{acr:dun}. At each time step $t$, we compute the optimal action $a^*_t$ using the optimal policy. Simultaneously, we compute a state-dependent noise action $\tilde{a}_t$ using the noise policy. The behavior policy samples an interpolation weight $\alpha_t \sim \mathcal{U}(0, \nu)$, where $\nu$ represents the maximum noise level. The final executed action $a_t$ is computed as
\begin{equation}
    a_t = (1 - \alpha_t) a^*_t + \alpha_t \tilde{a}_t.
\end{equation}
By dynamically sampling $\alpha_t$ at each step, the policy injects variable, state-conditioned noise. We collect $50{,}000$ transitions per dataset. To comprehensively map algorithm performance against data quality, we generate independent datasets across a grid of target policy complexities $\mathcal{C}_{\pi^{\star}} \in \{1, 10, 50\}$ and maximum noise levels $\nu \in \{0.0, 0.1, 0.5, 1.0\}$.

\paragraph{Training} We train two standard offline algorithms using the \textsc{d3rlpy} library \citep{seno2022d3rlpyofflinedeepreinforcement}, \gls{acr:bc} and \gls{acr:iql}. \gls{acr:bc} serves as our baseline, explicitly attempting to mimic the blended behavior policy without leveraging reward signals. \gls{acr:iql} is chosen for its ability to learn from sub-optimal data without querying \gls{acr:ood} actions. We use an expectile parameter of $\tau = 0.7$ to enable the algorithm to prioritize the higher-quality transitions within the blended dataset. Both algorithms are trained for $20{,}000$ gradient steps per dataset configuration, utilizing a batch size of $256$. 

\paragraph{Evaluation Protocol} To assess final performance, we measure $\tilde{r}_t$ in a live environment. For both \gls{acr:iql} and \gls{acr:bc}, across all noise and complexity configurations, we roll out 10 independent evaluation trajectories. We record the accumulated $\tilde{r}_t$ for each episode and compute its average across trajectories to obtain a performance metric. To quantify the algorithms' ability to stitch together optimal trajectories from noisy data, we compare the offline performance directly against the empirical average performance of the specific blended behavior policy used to collect the dataset.

\paragraph{Results}

As presented in Table \ref{tab:performance_results} and Figure~\ref{fig:offline}, the experiment reveals distinct behavioral regimes across the evaluated offline algorithms: For datasets with low-to-moderate noise and low-to-moderate complexity, both \gls{acr:bc} and \gls{acr:iql} perform on par with the data-generating policy. The core advantage of \gls{acr:iql} becomes evident in high-noise environments ($\nu \geq 0.5$). While \gls{acr:bc} continues to strictly imitate the degraded dataset, \gls{acr:iql} successfully filters out sub-optimal actions to surpass the behavior policy. For instance, at the maximum noise level ($\nu = 1.0$) and base complexity ($\mathcal{C}_{\pi^{\star}} = 1$), \gls{acr:iql} achieves an average return of $0.912$, significantly outperforming the dataset mean of $0.836$. This confirms \gls{acr:iql}'s ability to stitch together optimal trajectory fragments from highly noisy data. At the highest optimal policy complexity ($\mathcal{C}_{\pi^{\star}} = 50$), both algorithms suffer a severe performance collapse, regardless of the noise level. In these configurations, \gls{acr:bc} and \gls{acr:iql} fail to even match the dataset mean. This indicates that at extreme topological complexities, the models struggle to approximate the underlying target policy, likely due to the limited dataset size of $50{,}000$ transitions.

\begin{table}[H]
    \caption{Performance comparison in $\tilde{r}_t$ across noise levels and policy complexities (BP = Behavior policy, BC = Behavior cloning, IQL = Implicit Q-Learning.}
    \begin{center}
        \small
        \begin{tabular}{lcccc}
            \multicolumn{1}{l}{\bf $\nu$} & \multicolumn{1}{c}{\bf $\mathcal{C}_{\pi^{\star}}$} & \multicolumn{1}{c}{\bf BP} & \multicolumn{1}{c}{\bf BC} & \multicolumn{1}{c}{\bf IQL}
            \\ \hline \\
            0.0 & 1  & $1.000 \pm 0.00$ & $0.998 \pm 0.00$ & $0.997 \pm 0.00$ \\
            0.0 & 10 & $1.000 \pm 0.00$ & $0.979 \pm 0.01$ & $0.979 \pm 0.00$ \\
            0.0 & 50 & $1.000 \pm 0.00$ & $0.803 \pm 0.07$ & $0.798 \pm 0.07$ \\[1ex]
            
            0.1 & 1  & $0.982 \pm 0.01$ & $0.981 \pm 0.01$ & $0.981 \pm 0.01$ \\
            0.1 & 10 & $0.979 \pm 0.01$ & $0.971 \pm 0.01$ & $0.970 \pm 0.01$ \\
            0.1 & 50 & $0.983 \pm 0.01$ & $0.801 \pm 0.07$ & $0.799 \pm 0.07$ \\[1ex]
            
            0.25 & 1  & $0.957 \pm 0.03$ & $0.956 \pm 0.02$ & $0.958 \pm 0.01$ \\
            0.25 & 10 & $0.960 \pm 0.03$ & $0.955 \pm 0.02$ & $0.956 \pm 0.01$ \\
            0.25 & 50 & $0.958 \pm 0.03$ & $0.797 \pm 0.07$ & $0.792 \pm 0.07$ \\[1ex]
            
            0.5 & 1  & $0.901 \pm 0.07$ & $0.902 \pm 0.03$ & $0.938 \pm 0.02$ \\
            0.5 & 10 & $0.916 \pm 0.06$ & $0.913 \pm 0.03$ & $0.937 \pm 0.02$ \\
            0.5 & 50 & $0.918 \pm 0.06$ & $0.789 \pm 0.07$ & $0.791 \pm 0.07$ \\[1ex]
            
            1.0 & 1  & $0.836 \pm 0.11$ & $0.840 \pm 0.05$ & $0.912 \pm 0.03$ \\
            1.0 & 10 & $0.817 \pm 0.13$ & $0.818 \pm 0.06$ & $0.900 \pm 0.03$ \\
            1.0 & 50 & $0.840 \pm 0.11$ & $0.770 \pm 0.08$ & $0.780 \pm 0.07$ \\
        \end{tabular}
    \end{center}
    \label{tab:performance_results}
\end{table}

\begin{figure}[H]
    \centering
    \includegraphics[width=1\linewidth]{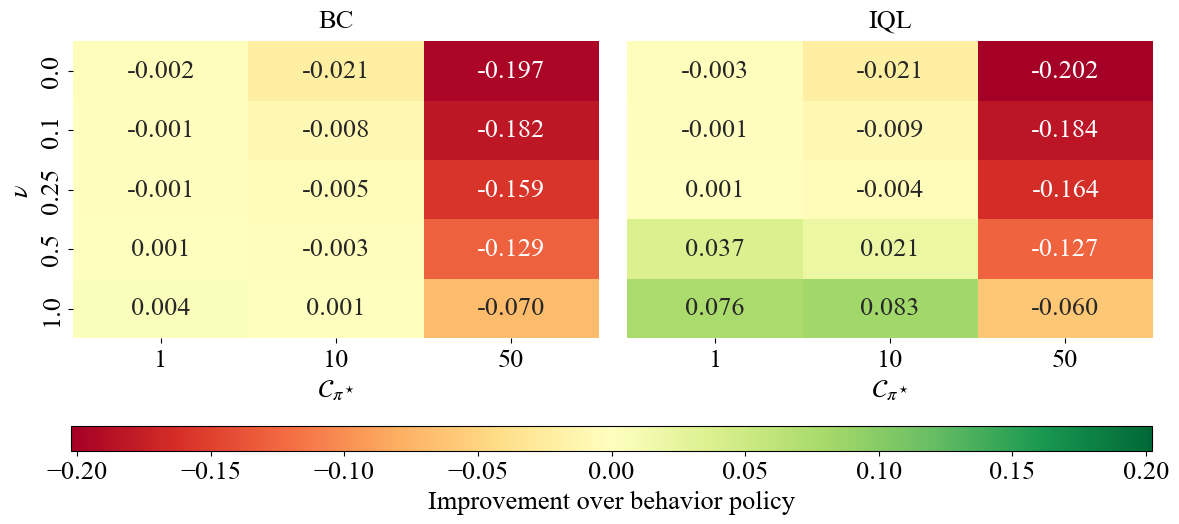}
    \caption{Improvement over the behavioral policy for Behavior Cloning (BC) and Implicit Q-Learning (IQL) by noise level $\nu$ and optimal policy complexity $\mathcal{C}_{\pi^{\star}}$.}
    \label{fig:offline}
\end{figure}

\newpage
\section{Ablations}\label{sec:ablations_fullpage}

\begin{figure}[h!]
    \centering
    \includegraphics[width=.9\linewidth]{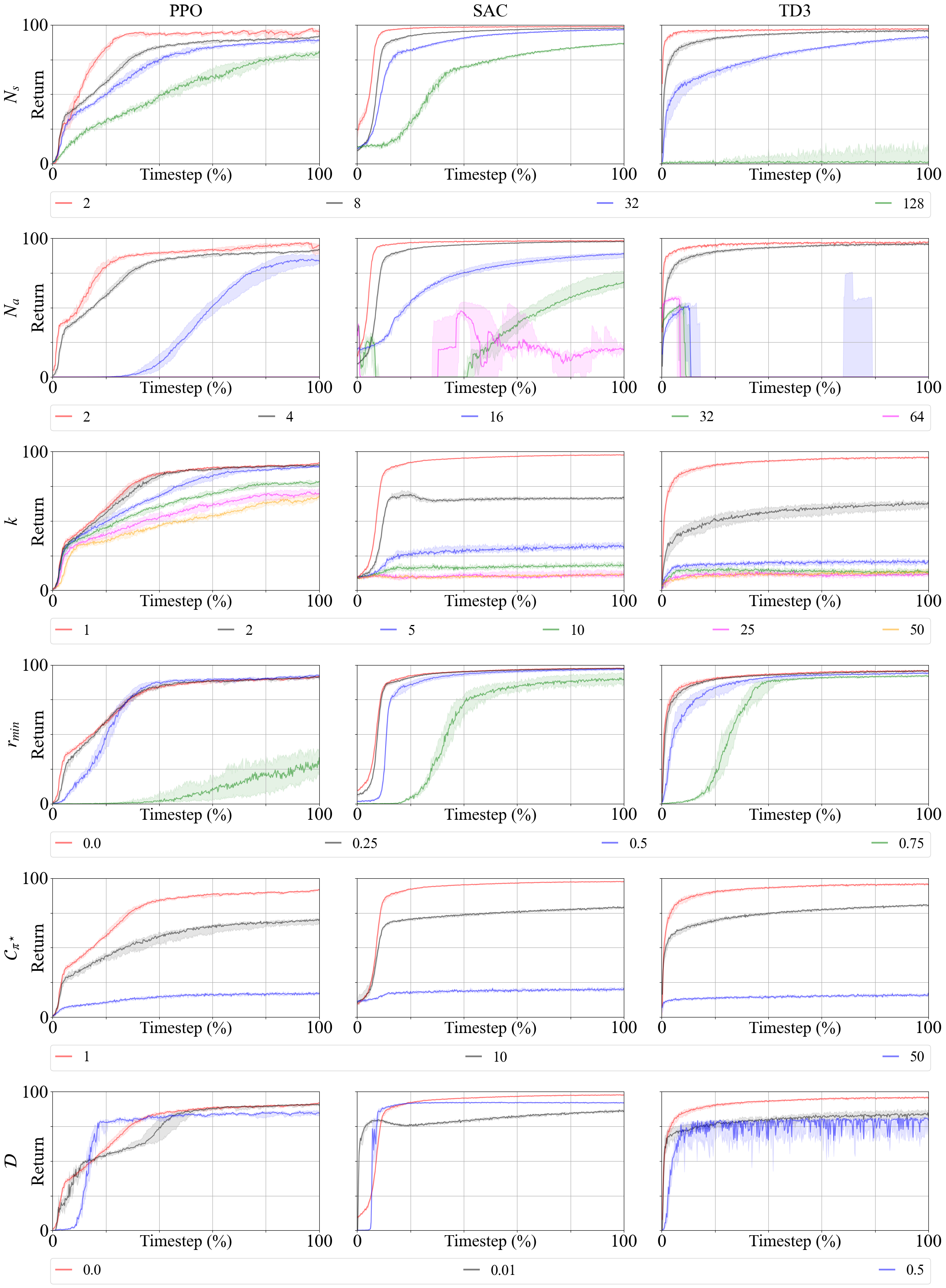}
    \caption{Ablations for PPO, SAC and TD3 across different task complexities over $10$ seeds. Curves indicate median evaluation performance, smoothed over $10$ points. Shaded areas indicate interquartile ranges.}
    \label{fig:ablations_fullpage}
\end{figure}

\end{document}